\newlength{\defbaselineskip}
\newcommand{\setlinespacing}[1]%
           {\setlength{\baselineskip}{#1 \defbaselineskip}}
\newcommand{\actaqed}{\hfill $\actabox$}
{\medskip\noindent \textit{Proof of #1. }}%
{\actaqed \medskip}
\def\D{{\mathcal D}}
\def\R{{\mathbb R}}
\def \<{\langle}
\def\>{\rangle}
\def \e{\epsilon}
\def \ff{\varphi}
\def \sp{\operatorname{span}}
\def\la{\lambda}
\newtheorem{Theorem}{Theorem}[section]
\newtheorem{Lemma}{Lemma}[section]
\numberwithin{equation}{section}
\begin{document}
\title{{Chebushev Greedy Algorithm in convex optimization} }
\author{V.N. Temlyakov \thanks{ University of South Carolina and Steklov Institute of Mathematics. Research was supported by NSF grant DMS-1160841 }} \maketitle
\begin{abstract}
{Chebyshev Greedy Algorithm is a generalization of the well known Orthogonal Matching Pursuit defined in a Hilbert space to the case of Banach spaces. We apply this algorithm for  constructing sparse approximate solutions (with respect to a given dictionary) to convex optimization problems. Rate of convergence results in a style of the Lebesgue-type inequalities are proved.   }
\end{abstract}

\section{Introduction}

We study sparse approximate solutions to convex optimization problems. We apply the technique developed in nonlinear approximation known under the name of {\it greedy approximation}. A typical 
problem of convex optimization is to find an approximate solution to the problem
\begin{equation}\label{1.0}
\inf_x E(x)
\end{equation}
under assumption that $E$ is a convex function. Usually, in convex optimization function $E$ is defined on a finite dimensional space $\R^n$ (see \cite{BL}, \cite{N}).
Recent needs of numerical analysis call for consideration of the above optimization problem on an infinite dimensional space, for instance, a space of 
continuous functions.  
Thus, we consider a convex function $E$ defined on a Banach space $X$. This paper is a follow up to papers \cite{Tco1}, \cite{Tco2}, and \cite{SSZ}. We refer the reader to the above mentioned papers for a detailed discussion and justification of importance of greedy methods in optimization problems. 

 Let $X$ be a Banach space with norm $\|\cdot\|$. We say that a set of elements (functions) $\D$ from $X$ is a dictionary, respectively, symmetric dictionary, if each $g\in \D$ has norm bounded by one ($\|g\|\le1$),
$$
g\in \D \quad \text{implies} \quad -g \in \D,
$$
and the closure of $\sp \D$ is $X$. For notational convenience in this paper symmetric dictionaries are considered. Results of the paper also hold for non-symmetric dictionaries with straight forward modifications. We denote the closure (in $X$) of the convex hull of $\D$ by $A_1(\D)$. In other words $A_1(\D)$ is the closure of conv($\D$). We use this notation because it has become a standard notation in relevant greedy approximation literature.
 
 We 
assume that $E$ is Fr{\'e}chet differentiable and that the set
$$
D:=\{x:E(x)\le E(0)\}
$$
is bounded.
For a bounded set $D$ define the modulus of smoothness of $E$ on $D$ as follows
\begin{equation}\label{1.1}
\rho(E,u):=\frac{1}{2}\sup_{x\in D, \|y\|=1}|E(x+uy)+E(x-uy)-2E(x)|.
\end{equation}
We say that $E$ is {\it uniformly smooth} if $\rho(E,u)=o(u)$, $u\to 0$. 

We defined and studied in \cite{Tco1} the following generalization of the Weak Chebyshev Greedy Algorithm (see \cite{Tbook}, Ch. 6) for convex optimization.

 {\bf Weak Chebyshev Greedy Algorithm (WCGA(co)).} Let $\tau :=\{t_k\}_{k=1}^\infty$, $t_k\in (0,1]$, $k=1,2,\dots$, be a weakness sequence. 
We define $G_0 := 0$. Then for each $m\ge 1$ we have the following inductive definition.

(1) $\varphi_m :=\varphi^{c,\tau}_m \in \D$ is any element satisfying
$$
\<-E'(G_{m-1}),\varphi_m\> \ge t_m  \sup_{g\in \D}\< -E'(G_{m-1}),g\>.
$$

(2) Define
$$
\Phi_m := \Phi^\tau_m := \sp \{\varphi_j\}_{j=1}^m,
$$
and define $G_m := G_m^{c,\tau}$ to be the point from $\Phi_m$ at which $E$ attains the minimum:
$$
E(G_m)=\inf_{x\in \Phi_m}E(x).
$$ 

We consider here along with the WCGA(co) the following greedy algorithm.

{\bf $E$-Greedy Chebyshev Algorithm (EGCA(co)).} 
We define $G_0 := 0$. Then for each $m\ge 1$ we have the following inductive definition.

(1) $\varphi_m :=\varphi^{E,\tau}_m \in \D$ is any element satisfying (assume existence)
$$
\inf_{c}E(G_{m-1}+c\varphi_m) = \inf_{c,g\in\D}E(G_{m-1}+cg).
$$

(2) Define
$$
\Phi_m := \Phi^\tau_m := \sp \{\varphi_j\}_{j=1}^m,
$$
and define $G_m := G_m^{E,\tau}$ to be the point from $\Phi_m$ at which $E$ attains the minimum:
$$
E(G_m)=\inf_{x\in \Phi_m}E(x).
$$ 

The EGCA(co) is in a style of $X$-Greedy algorithms studied in approximation theory (see \cite{Tbook}, Ch. 6). In a special case of $X=\R^d$ and $\D$ is a canonical basis of $\R^d$ the EGCA(co) was introduced and studied in \cite{SSZ}. Convergence and rate of convergence of the WCGA(co) were studied in \cite{Tco1}. For instance, the following rate of convergence theorem was proved in \cite{Tco1}. 

\begin{Theorem}\label{T1.1} Let $E$ be a uniformly smooth convex function with modulus of smoothness $\rho(E,u)\le \gamma u^q$, $1<q\le 2$. Take a number $\e\ge 0$ and an element  $f^\e$ from $D$ such that
$$
E(f^\e) \le \inf_{x\in D}E(x)+ \e,\quad
f^\e/B \in A_1(\D),
$$
with some number $B\ge 1$.
Then we have for the WCGA(co) ($p:=q/(q-1)$)
\begin{equation}\label{2.11o}
E(G_m)-\inf_{x\in D}E(x) \le  \max\left(2\e, C(q,\gamma)B^q\left(C(E,q,\gamma)+\sum_{k=1}^mt_k^p\right)^{1-q}\right) . 
\end{equation}
\end{Theorem}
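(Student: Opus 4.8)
The plan is to reduce the theorem to a scalar recursion for the error $a_m:=E(G_m)-E^\ast$, where $E^\ast:=\inf_{x\in D}E(x)$, and then to solve that recursion. The first ingredient is a one-sided smoothness estimate: combining the tangent inequality $E(x-uy)\ge E(x)-u\<E'(x),y\>$ (valid since $E$ is convex and differentiable) with the definition of $\rho(E,u)$ and the hypothesis $\rho(E,u)\le\gamma u^q$, I obtain
\[
E(x+uy)\le E(x)+u\<E'(x),y\>+2\gamma u^q,\qquad x\in D,\ \|y\|\le1,\ u>0,
\]
where the case $\|y\|\le1$ (rather than $\|y\|=1$) is handled by monotone rescaling inside $\rho$. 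Since the subspaces $\Phi_m$ are nested, $\{E(G_m)\}$ is non-increasing, so $E(G_{m-1})\le E(0)$ and hence every iterate $G_{m-1}$ lies in $D$; thus this estimate applies at $x=G_{m-1}$.

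Next I would convert the greedy quantity into a bound on the error. Because $G_{m-1}$ minimizes $E$ over the subspace $\Phi_{m-1}$ and $E$ is Fr\'echet differentiable, first-order optimality gives $\<E'(G_{m-1}),\psi\>=0$ for all $\psi\in\Phi_{m-1}$, and in particular $\<E'(G_{m-1}),G_{m-1}\>=0$. Combining this with the convexity inequality $E(f^\e)\ge E(G_{m-1})+\<E'(G_{m-1}),f^\e-G_{m-1}\>$ and with $E(f^\e)\le E^\ast+\e$ yields
\[
\<-E'(G_{m-1}),f^\e\>=\<-E'(G_{m-1}),f^\e-G_{m-1}\>\ge a_{m-1}-\e.
\]
Since $f^\e/B\in A_1(\D)$ and $\<-E'(G_{m-1}),\cdot\>$ is continuous and linear, its value at $f^\e/B$ is a limit of convex combinations of its values on $\D$, so $\<-E'(G_{m-1}),f^\e\>\le B\sup_{g\in\D}\<-E'(G_{m-1}),g\>$. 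Together with the weak-greedy choice of $\ff_m$ this gives $\<-E'(G_{m-1}),\ff_m\>\ge (t_m/B)(a_{m-1}-\e)$.

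With these in hand I form the recursion. As $G_m$ minimizes $E$ over $\Phi_m$, which contains $G_{m-1}+\la\ff_m$, the two previous estimates give $a_m\le a_{m-1}-\la(t_m/B)(a_{m-1}-\e)+2\gamma\la^q$ for every $\la>0$. If $a_m\le2\e$ the asserted bound holds at once; otherwise monotonicity of $\{a_k\}$ forces $a_{k-1}>2\e$, hence $a_{k-1}-\e\ge a_{k-1}/2$, for all $k\le m$, and minimizing the right-hand side over $\la$ (optimal $\la\sim(t_k a_{k-1}/B)^{1/(q-1)}$) produces
\[
a_k\le a_{k-1}-c(q,\gamma)\,B^{-p}t_k^{\,p}a_{k-1}^{\,p},\qquad k=1,\dots,m.
\]
Setting $\beta:=p-1=1/(q-1)$ and using $a_k^{-\beta}-a_{k-1}^{-\beta}\ge\beta a_{k-1}^{-\beta-1}(a_{k-1}-a_k)$, this telescopes to $a_m^{-\beta}\ge a_0^{-\beta}+\beta c(q,\gamma)B^{-p}\sum_{k=1}^m t_k^{\,p}$. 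Since $p/\beta=q$ and $-1/\beta=1-q$, and since dropping the factor $B^p\ge1$ from the additive constant only weakens the estimate, this rearranges into $a_m\le C(q,\gamma)B^q\bigl(C(E,q,\gamma)+\sum_{k=1}^m t_k^{\,p}\bigr)^{1-q}$ with $C(E,q,\gamma)$ determined by $a_0=E(0)-E^\ast$ (the case $a_0=0$ being trivial). Combined with the case $a_m\le2\e$, this is exactly the stated maximum.

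The main obstacle is the step that turns the directional information $\<-E'(G_{m-1}),\ff_m\>$ produced by the algorithm into a genuine bound on the optimization error $a_{m-1}$. This is where both the Chebyshev subspace-minimization, through the orthogonality $\<E'(G_{m-1}),G_{m-1}\>=0$, and the geometric hypothesis $f^\e/B\in A_1(\D)$ are indispensable: the former makes the term $\<-E'(G_{m-1}),G_{m-1}\>$ vanish, and the latter allows the supremum over $\D$ to be compared with the value at $f^\e$. Once this link is established, the one-sided smoothness estimate, the optimization in $\la$, and the telescoping are all routine.
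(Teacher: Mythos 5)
Your proposal is correct and follows essentially the same route as the paper's source for this theorem (and as the analogous argument for Lemma \ref{L2.3}/Theorem \ref{T2.1} given here): Lemma \ref{L1.1} for the one-sided smoothness estimate, Lemma \ref{L2.1} to kill $\<E'(G_{m-1}),G_{m-1}\>$, Lemma \ref{L2.2} together with $f^\e/B\in A_1(\D)$ to lower-bound the greedy inner product by $(t_m/B)(a_{m-1}-\e)$, then optimization in $\la$ and the standard telescoping of $a_k^{-1/(q-1)}$. The only differences are cosmetic bookkeeping (measuring the error against $\inf_{x\in D}E(x)$ rather than against $E(f^\e)$, and the $a_m\le 2\e$ case split), and all the delicate points — admissibility of $\la$, the sign condition $a_{k-1}-\e>0$, and absorbing the $B^p$ factor into the additive constant using $B\ge 1$ and $1-q<0$ — are handled correctly.
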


We will use the following notations. Let $f_0$ be a point of minimum of $E$:
$$
E(f_0) = \inf_{x\in D}E(x).
$$
We denote for $m=1,2,\dots$
$$
f_m:=f_0-G_m.
$$

In particular, if the point of minimum $f_0$ belongs to $A_1(\D)$, then Theorem \ref{T1.1} in the case $t_k=t\in(0,1)$, $k=1,\dots$, with $\e=0$, $B=1$, gives
\begin{equation}\label{1.4}
E(G_m)-E(f_0)\le C(q,\gamma,t)m^{1-q}.
\end{equation}
Inequality (\ref{1.4}) uses only information that $f_0\in A_1(\D)$. 
Theorem \ref{T1.1} is designed in a way that the convergence rate is determined by smoothness of $E$ and complexity of $f_0$.   Our way of measuring complexity of the element $f_0$ in Theorem \ref{T1.1} is based on $A_1(\D)$. Given a dictionary $\D$ we say that $f_0$ is {\it simple} with respect to $\D$ if $f_0\in A_1(\D)$. 
Next, let for every $\e>0$ an element $f^\e$ be such that 
$$
E(f^\e)\le E(f_0)+ \e,\qquad f^\e/A(\e) \in A_1(\D)
$$
with some number $A(\e)$ (the smaller the $A(\e)$ the better). Then we say that complexity of $f_0$ is bounded (bounded from above) by the function $A(\e)$. 

 We apply algorithms which at the $m$th iteration provide an $m$-term polynomial $G_m$ with respect to $\D$. The approximant belongs to the domain $D$ of our interest. Then on one hand we always have the lower bound
$$
E(G_m) - \inf_{x\in D} E(x) \ge \inf_{x\in D\cap \Sigma_m(\D)} E(x) - \inf_{x\in D} E(x)
$$
where $\Sigma_m(\D)$ is a collection of all $m$-term polynomials with respect to $\D$.
On the other hand if we know $f_0$ then the best we can do with our algorithms is to get
$$
\|f_0-G_m\| = \sigma_m(f_0,\D)
$$
where $\sigma_m(f_0,\D)$ is the best $m$-term approximation of $f_0$ with respect to $\D$.
Then we can aim at building algorithms that provide an error $E(G_m)-E(f_0)$ comparable to $\rho(E,\sigma_m(f_0,\D))$. It would be in a style of the Lebesgue-type inequalities. However, it is known from greedy approximation theory that there is no Lebesgue-type inequalities which hold for an arbitrary dictionary even in the case of Hilbert spaces. There are the Lebesgue-type inequalities for special dictionaries.  We refer the reader to \cite{Tbook}, \cite{LivTem}, \cite{T144}, \cite{Z2} for results on the Lebesgue-type inequalities. In this paper we obtain rate of convergence results for the WCGA(co) in a style of the Lebesgue-type inequalities.

We will use the following assumptions on properties of $E$.

{\bf E1. Smoothness.} We assume that $E$ is a convex function with 
$$\rho(E,u)\le \gamma u^2.$$

{\bf E2. Restricted strong convexity.} We assume that for any $S$-sparse element $f$ we have
\begin{equation}\label{E2}
E(f)-E(f_0) \ge \beta \|f-f_0\|^2.
\end{equation}

Here is one assumption on the dictionary $\D$ that we will use (see \cite{T144}).  For notational simplicity we formulate it for a countable dictionary $\D=\{g_i\}_{i=1}^\infty$. 
 
{\bf A.} We say that $f=\sum_{i\in T}x_ig_i$ has $\ell_1$ incoherence property with parameters $S$, $V$, and $r$ if for any $A\subset T$ and any $\Lambda$ such that $A\cap \Lambda =\emptyset$, $|A|+|\Lambda| \le S$ we have for any $\{c_i\}$
\begin{equation}\label{C3}
\sum_{i\in A}|x_i| \le V|A|^r\|f_A-\sum_{i\in\Lambda}c_ig_i\|,\quad f_A:=\sum_{i\in A} x_ig_i.
\end{equation}
A dictionary $\D$ has $\ell_1$ incoherence property with parameters $K$, $S$, $V$, and $r$ if for any $A\subset B$, $|A|\le K$, $|B|\le S$ we have for any $\{c_i\}_{i\in B}$
$$
\sum_{i\in A} |c_i| \le V|A|^r\|\sum_{i\in B} c_ig_i\|.
$$

 The following theorem is the main result of the paper. 
\begin{Theorem}\label{T2.1} Let $E$ satisfy assumptions {\bf E1} and {\bf E2}. Suppose for a point of minimum $f_0$ we have $\|f_0-f^\e\|\le \e$ with $K$-sparse $f:=f^\e$ satisfying property {\bf A}. Then for the WCGA(co) with weakness parameter $t$ we have for $K+m \le S$
$$
E(G_m)-E(f_0) \le \max\left((E(0)-E(f_0))\exp\left(-\frac{c_1m}{K^{2r}}\right), 8(\gamma^2/\beta)\e^2\right)+2\gamma \e^2, 
$$
where $c_1:= \frac{\beta t^2}{64\gamma V^2}$.
\end{Theorem}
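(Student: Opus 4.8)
The plan is to run the standard greedy-approximation machinery: derive a one-step recursion for the error $a_m := E(G_m)-E(f_0)$, lower-bound the greedy gain using convexity together with the incoherence property \textbf{A} and the restricted strong convexity \textbf{E2}, and then iterate the resulting scalar recursion.

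First I would record the smoothness estimate coming from \textbf{E1}: for $x\in D$ and $\|y\|=1$, convexity together with the definition of $\rho$ gives $E(x+uy)\le E(x)+u\langle E'(x),y\rangle+2\gamma u^2$. Applying this at $x=G_{m-1}$ along $\varphi_m$ and using that $G_m$ minimizes $E$ over $\Phi_m\ni G_{m-1}+\lambda\varphi_m$, then optimizing in $\lambda$, yields the one-step inequality $a_m\le a_{m-1}-\frac{t^2 S_m^2}{8\gamma}$, where $S_m:=\sup_{g\in\D}\langle -E'(G_{m-1}),g\rangle$ and the weakness parameter $t$ enters through step (1) of the algorithm. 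Monotonicity $a_m\le a_{m-1}$ is automatic since $\Phi_{m-1}\subset\Phi_m$, and $G_{m-1}\in D$ because the algorithm decreases $E$.

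The heart of the argument is a lower bound on $S_m$ in terms of $a_{m-1}$. Here I would use the minimality of $G_{m-1}$ on $\Phi_{m-1}$, which forces $\langle E'(G_{m-1}),\varphi_j\rangle=0$ for $j\le m-1$; hence $-E'(G_{m-1})$ annihilates every already-chosen dictionary element and $\langle -E'(G_{m-1}),G_{m-1}\rangle=0$. Writing $f=f^\e=\sum_{i\in T}x_ig_i$ with $|T|\le K$ and splitting $T=T_1\cup T_2$ into chosen and not-yet-chosen indices, convexity gives $E(G_{m-1})-E(f)\le\langle -E'(G_{m-1}),f-G_{m-1}\rangle$, and the orthogonality collapses the right-hand side to $\sum_{i\in T_2}x_i\langle -E'(G_{m-1}),g_i\rangle\le(\sum_{i\in T_2}|x_i|)S_m$. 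Thus $S_m\ge(E(G_{m-1})-E(f))/\sum_{i\in T_2}|x_i|$. Property \textbf{A} is then applied with $A=T_2$ and $\Lambda$ the set of chosen indices (disjoint from $T_2$, with $|A|+|\Lambda|\le K+m-1\le S$): writing $G_{m-1}=\sum_{i\in\Lambda}b_ig_i$ and choosing the free coefficients $c_i=b_i-x_i$ so that $f_A-\sum_{i\in\Lambda}c_ig_i=f-G_{m-1}$ gives $\sum_{i\in T_2}|x_i|\le V K^r\|f-G_{m-1}\|$. Finally \textbf{E2} applied to the $(m-1)$-sparse $G_{m-1}$ bounds $\|G_{m-1}-f_0\|\le\sqrt{a_{m-1}/\beta}$, so by the triangle inequality and $\|f-f_0\|\le\e$ we get $\|f-G_{m-1}\|\le\e+\sqrt{a_{m-1}/\beta}$; smoothness at the minimizer ($E'(f_0)=0$) gives $E(f)-E(f_0)\le 2\gamma\e^2$, hence $E(G_{m-1})-E(f)\ge a_{m-1}-2\gamma\e^2$. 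Combining,
$$
S_m\ge\frac{a_{m-1}-2\gamma\e^2}{V K^r\left(\e+\sqrt{a_{m-1}/\beta}\right)}.
$$

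Substituting this into the one-step inequality gives a scalar recursion for $a_m$, and the remaining work is a case analysis on the size of $a_{m-1}$ relative to the $\e$-floor. When $a_{m-1}$ exceeds a fixed multiple of $(\gamma^2/\beta)\e^2$ one checks that $a_{m-1}-2\gamma\e^2\gtrsim a_{m-1}$ and $\e\lesssim\sqrt{a_{m-1}/\beta}$, so the denominator is $\lesssim\sqrt{a_{m-1}/\beta}$ and the recursion becomes a genuine geometric contraction $a_m\le a_{m-1}(1-c_1/K^{2r})$ with $c_1$ as stated; iterating and using $1-x\le e^{-x}$ produces the factor $\exp(-c_1m/K^{2r})$ started from $a_0=E(0)-E(f_0)$. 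Once $a_{m-1}$ has dropped to the floor, monotonicity keeps it there, which together with the additive $2\gamma\e^2$ accounting for $E(f)-E(f_0)$ yields the $\max(\cdots)+2\gamma\e^2$ form. I expect the main obstacle to be exactly this last bookkeeping: carrying the two incommensurable terms $\e$ and $\sqrt{a_{m-1}/\beta}$ through the recursion and organizing the case split so that the rate $c_1=\beta t^2/(64\gamma V^2)$ and the floor $8(\gamma^2/\beta)\e^2$ emerge with the claimed constants. The choice of $\Lambda$ and of the free coefficients $c_i$ in the application of property \textbf{A} also requires care, since that is where the incoherence hypothesis is converted into the crucial $K^r$ dependence.
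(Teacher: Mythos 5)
Your proposal is correct and follows essentially the same route as the paper: the same one-step recursion from smoothness plus the weakness step, the same use of the orthogonality $\langle E'(G_{m-1}),\phi\rangle=0$ on $\Phi_{m-1}$ together with convexity, the same application of property \textbf{A} with $A=T_2$ and $\Lambda$ the chosen indices, the same use of \textbf{E2} and the triangle inequality, and the same two-case iteration. The only cosmetic differences are that the paper measures the error against $f^\e$ rather than $f_0$, runs the case split on $\|f_{m-1}\|^2$ versus $4(\gamma/\beta)\e^2$ rather than on $a_{m-1}$, and plugs in a specific suboptimal $\lambda$ (checking $\lambda\le\lambda_1$) instead of fully optimizing --- choices that make the stated constants $c_1=\beta t^2/(64\gamma V^2)$ and $8(\gamma^2/\beta)\e^2$ come out exactly, which is precisely the bookkeeping you flagged.
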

 
 Let us apply Theorem \ref{T2.1} in a particular case $r=1/2$. If we assume that $\sigma_K(f_0,\D) \le C_1K^{-s}$ then for $m$ of order $K\ln K$ Theorem \ref{T2.1} with $\e=C_1K^{-s}$ provides the bound
 $$
 E(G_m)-E(f_0) \le C_2 K^{-2s}.
 $$
 Note that $K^{-2s}$ is of oder $\rho(E,K^{-s})$ in our case. 
 
 In the case of direct application of the Weak Chebyshev Greedy Algorithm to the element $f_0$ the corresponding results in a style of the Lebesgue-type inequalities are known (see \cite{LivTem} and \cite{T144}).

\section{Proofs}

We assume that $E$ is Fr{\'e}chet differentiable. Then convexity of $E$ implies that for any $x,y$ 
\begin{equation}\label{1.2}
E(y)\ge E(x)+\<E'(x),y-x\>
\end{equation}
or, in other words,
\begin{equation}\label{1.3}
E(x)-E(y) \le \<E'(x),x-y\> = \<-E'(x),y-x\>.
\end{equation} 
We will often use the following simple lemma (see \cite{Tco1}).
\begin{Lemma}\label{L1.1} Let $E$ be Fr{\'e}chet differentiable convex function. Then the following inequality holds for $x\in D$
\begin{equation}\label{1.6}
0\le E(x+uy)-E(x)-u\<E'(x),y\>\le 2\rho(E,u\|y\|).  
\end{equation}
\end{Lemma}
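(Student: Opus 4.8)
The plan is to establish the two inequalities in (\ref{1.6}) separately; both reduce to the convexity inequality (\ref{1.2}), with the upper bound additionally calling on the definition (\ref{1.1}) of $\rho(E,\cdot)$. One may assume $y\neq 0$, since for $y=0$ each term in (\ref{1.6}) is zero (note that $\rho(E,0)=0$ directly from (\ref{1.1})).

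First I would dispatch the \emph{lower} bound. Applying (\ref{1.2}) with base point $x$ and second argument $x+uy$ gives
\[
E(x+uy)\ge E(x)+\<E'(x),(x+uy)-x\>=E(x)+u\<E'(x),y\>,
\]
which is exactly $E(x+uy)-E(x)-u\<E'(x),y\>\ge 0$. This step uses only differentiability and convexity, not smoothness.

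For the \emph{upper} bound I would first apply (\ref{1.2}) in the opposite direction, with second argument $x-uy$, to obtain
\[
E(x-uy)\ge E(x)-u\<E'(x),y\>,\qquad\text{i.e.}\qquad E(x)-E(x-uy)\le u\<E'(x),y\>.
\]
Next I would feed the pair $(x,\hat y)$, where $\hat y:=y/\|y\|$ is a unit vector, into the definition (\ref{1.1}); since $x\in D$, $\|\hat y\|=1$, and $u\|y\|\,\hat y=uy$, this yields
\[
E(x+uy)+E(x-uy)-2E(x)\le 2\rho(E,u\|y\|).
\]
Combining the two displays eliminates the $E(x-uy)$ term:
\[
E(x+uy)-E(x)-u\<E'(x),y\>\le 2\rho(E,u\|y\|)+\bigl(E(x)-E(x-uy)\bigr)-u\<E'(x),y\>\le 2\rho(E,u\|y\|),
\]
where the first inequality is a rearrangement of the modulus bound and the last inequality is the opposite-direction estimate above. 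This is the desired bound.

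The one point requiring care is the \emph{scaling} of the argument of $\rho$: the modulus in (\ref{1.1}) is taken over unit directions with step size $u$, whereas the lemma's direction $y$ is unnormalized. Matching $x\pm uy$ forces the substitution of $u$ by $u\|y\|$ and of the direction by $\hat y$, which is why $\rho(E,u\|y\|)$ (rather than $\rho(E,u)$) appears on the right. Beyond this bookkeeping there is no genuine analytic obstacle, since both halves follow at once from (\ref{1.2}) together with the defining inequality for $\rho$.
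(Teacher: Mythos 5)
Your proof is correct and follows the standard argument (the paper itself defers this lemma to \cite{Tco1}, where essentially the same proof appears): the lower bound is just the gradient inequality (\ref{1.2}) applied to the pair $x$, $x+uy$, and the upper bound combines (\ref{1.2}) at $x-uy$ with the definition (\ref{1.1}) of the modulus applied to the unit direction $y/\|y\|$ with step $u\|y\|$. Your attention to the rescaling $u\mapsto u\|y\|$ is exactly the right bookkeeping, and no further argument is needed.
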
 

  The following two simple lemmas are  well-known (see \cite{Tbook}, Chapter 6 and \cite{Tco1}, Section 2).
\begin{Lemma}\label{L2.1} Let $E$ be a uniformly smooth convex function on a Banach space $X$ and $L$ be a finite-dimensional subspace of $X$. Let $x_L$ denote the point from $L$ at which $E$ attains the minimum:
$$
E(x_L)=\inf_{x\in L}E(x).
$$ 
 Then we have 
$$
\<E'(x_L),\phi\> =0
$$
for any $\phi \in L$.
\end{Lemma}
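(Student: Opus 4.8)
The plan is to reduce the claim to the elementary one-variable fact that a function with an interior minimum has vanishing derivative there, and to make this rigorous with the smoothness bound of Lemma \ref{L1.1}.

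First I would fix an arbitrary $\phi\in L$. Since $L$ is a linear subspace and $x_L\in L$, the point $x_L+t\phi$ lies in $L$ for every $t\in\R$, so the defining property $E(x_L)=\inf_{x\in L}E(x)$ gives $E(x_L+t\phi)\ge E(x_L)$ for all $t$. Thus the scalar function $g(t):=E(x_L+t\phi)$ has a global (hence interior) minimum at $t=0$. Note also that $x_L\in D$: because $0\in L$ we have $E(x_L)\le E(0)$, so Lemma \ref{L1.1} is applicable at $x_L$.

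Next, I would quantify the one-sided behaviour of $g$ near $0$. Applying (\ref{1.6}) with $x=x_L$, $u=t>0$, and $y=-\phi$ gives $E(x_L-t\phi)-E(x_L)\le -t\<E'(x_L),\phi\>+2\rho(E,t\|\phi\|)$; combined with the minimality bound $E(x_L-t\phi)-E(x_L)\ge 0$ and divided by $t$, this yields $\<E'(x_L),\phi\>\le 2\rho(E,t\|\phi\|)/t$. Letting $t\to 0^+$ and using $\rho(E,u)=o(u)$ (uniform smoothness) forces $\<E'(x_L),\phi\>\le 0$. Replacing $\phi$ by $-\phi$ gives the reverse inequality, so $\<E'(x_L),\phi\>=0$; since $\phi\in L$ was arbitrary, the lemma follows.

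I do not expect a genuine obstacle here: the statement is simply the stationarity (first-order optimality) condition for an unconstrained minimum along the subspace $L$. The only point deserving a word is the passage to the limit, which is exactly where uniform smoothness enters; if one is willing to differentiate directly, the argument collapses to $g'(0)=\<E'(x_L),\phi\>=0$ straight from the definition of the Fr\'echet derivative, with Lemma \ref{L1.1} merely supplying the error control that legitimizes that step.
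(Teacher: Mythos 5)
Your proof is correct; the paper itself gives no proof of this lemma, citing it as well-known (see \cite{Tbook}, Chapter 6 and \cite{Tco1}), and your argument --- the first-order optimality condition along the subspace, made rigorous via Lemma \ref{L1.1} together with $\rho(E,u)=o(u)$, including the useful observation that $0\in L$ forces $x_L\in D$ so that Lemma \ref{L1.1} applies --- is exactly the standard one found in those sources.
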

 
\begin{Lemma}\label{L2.2} For any bounded linear functional $F$ and any dictionary $\D$, we have
$$
 \sup_{g\in \D}\<F,g\> = \sup_{f\in  A_1(\D)} \<F,f\>.
$$
\end{Lemma}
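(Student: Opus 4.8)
The plan is to prove the two inequalities separately, since a supremum identity of this type splits naturally into an inclusion bound and a linearity/density bound. For the first, I would record the trivial inclusion $\D\subseteq A_1(\D)$: each $g\in\D$ is a degenerate convex combination of elements of $\D$, hence lies in $\operatorname{conv}(\D)$ and therefore in its closure $A_1(\D)$. Taking the supremum over the smaller set immediately gives
$$
\sup_{g\in\D}\<F,g\>\le\sup_{f\in A_1(\D)}\<F,f\>.
$$

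For the reverse inequality I would set $M:=\sup_{g\in\D}\<F,g\>$ and show $\<F,f\>\le M$ for every $f\in A_1(\D)$, in two steps. First, for $f$ in the convex hull, write $f=\sum_{i=1}^n\lambda_i g_i$ with $g_i\in\D$, $\lambda_i\ge 0$, and $\sum_{i=1}^n\lambda_i=1$; linearity of $F$ then yields
$$
\<F,f\>=\sum_{i=1}^n\lambda_i\<F,g_i\>\le\Big(\sum_{i=1}^n\lambda_i\Big)M=M,
$$
so the bound holds on all of $\operatorname{conv}(\D)$. Second, given an arbitrary $f\in A_1(\D)$, I would choose $f_n\in\operatorname{conv}(\D)$ with $\|f_n-f\|\to 0$; since $F$ is bounded, hence continuous, $\<F,f_n\>\to\<F,f\>$, and as each $\<F,f_n\>\le M$ the limit satisfies $\<F,f\>\le M$ as well. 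Taking the supremum over $f\in A_1(\D)$ gives $\sup_{f\in A_1(\D)}\<F,f\>\le M$, and combining with the first inequality produces the claimed equality.

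There is no substantial obstacle here. The only point at which a hypothesis is genuinely used is the final limit passage, where boundedness of $F$ is exactly what licenses carrying the inequality through the limit; everything else rests on the inclusion $\D\subseteq A_1(\D)$ and the linearity of $F$ on convex combinations. If anything, the mild care required is simply to treat the convex hull and its closure in separate steps rather than attempting to pass $F$ through an infinite convex combination directly.
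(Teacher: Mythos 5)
Your proof is correct and is exactly the standard argument: the paper itself omits the proof, citing it as well-known (see \cite{Tbook}, Ch.~6), and the argument there is the same two-step one you give --- the inclusion $\D\subseteq A_1(\D)$ for one inequality, and linearity over convex combinations plus continuity of $F$ under closure for the other. Nothing to add.
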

{\it Proof of Theorem \ref{T2.1}.}  Let 
$$
 f:=f^\e=\sum_{i\in T}x_ig_i,\quad g_i\in\D,\quad |T|=K.
 $$
 We examine $n$ iterations of the algorithm for $n=1,\dots,m$.  Denote by $T^n$ the set of indices of $g_i$ picked by the WCGA(co) after $n$ iterations, $\Gamma^n := T\setminus T^n$.
 Denote as above by $A_1(\D)$ the closure in $X$ of the convex hull of the symmetric dictionary $\D$.   We will bound from above $a_n:=E(G_n)-E(f^\e)$.  Assume $\|f_{n-1}\|^2\ge 4(\gamma/\beta) \e^2$ for all $n=1,\dots,m$.   Denote $A_n:=\Gamma^{n-1}$ and 
 $$
 f_{A_n}:= f^\e_{A_n} := \sum_{i\in A_n} x_ig_i, \quad \|f_{A_n}\|_1:=\sum_{i\in A_n} |x_i|.    
 $$    
 The following lemma is used in our proof.
\begin{Lemma}\label{L2.3} Let $E$ be a uniformly smooth convex function   with modulus of smoothness $\rho(E,u)$. Take a number $\e\ge 0$ and a $K$-sparse element  $f^\e=\sum_{i\in T} x_ig_i$ from $D$ such that
$$
\|f_0-f^\e\|\le \e.
$$
Then we have for the WCGA(co)
$$
E(G_n)-E(f^\e) \le E(G_{n-1})-E(f^\e) 
$$
$$
+ \inf_{\la\ge0}(-\la t\|f_{A_n}\|_1^{-1}(E(G_{n-1})-E(f^\e)) + 2\rho(E,\la)),
$$
for $ n=1,2,\dots$ .
\end{Lemma}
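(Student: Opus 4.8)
The plan is to establish the stated bound as a one-step descent estimate: I compare the new approximant $G_n$ against the admissible competitor $G_{n-1}+\lambda\varphi_n$ and then bound the linear gain $\<-E'(G_{n-1}),\varphi_n\>$ from below by the current energy gap $E(G_{n-1})-E(f^\e)$.

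First I would use that $G_n$ minimizes $E$ over $\Phi_n$, together with $G_{n-1}\in\Phi_{n-1}\subset\Phi_n$ and $\varphi_n\in\Phi_n$, so that $E(G_n)\le E(G_{n-1}+\lambda\varphi_n)$ for every $\lambda\ge0$. Since $G_{n-1}$ minimizes $E$ over $\Phi_{n-1}\ni0$ we have $E(G_{n-1})\le E(0)$, hence $G_{n-1}\in D$ and Lemma \ref{L1.1} applies with $x=G_{n-1}$, $y=\varphi_n$, $u=\lambda$. Using $\|\varphi_n\|\le1$ and monotonicity of $\rho(E,\cdot)$, this yields for $\lambda\ge0$
$$
E(G_n)-E(f^\e)\le E(G_{n-1})-E(f^\e)-\lambda\<-E'(G_{n-1}),\varphi_n\>+2\rho(E,\lambda).
$$
It then remains only to bound $\<-E'(G_{n-1}),\varphi_n\>$ from below by $t\|f_{A_n}\|_1^{-1}(E(G_{n-1})-E(f^\e))$; substituting this and taking the infimum over $\lambda\ge0$ would finish the proof.

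The heart of the matter is that lower bound, which I would obtain in three moves. By the greedy selection rule and Lemma \ref{L2.2}, $\<-E'(G_{n-1}),\varphi_n\>\ge t\sup_{\phi\in A_1(\D)}\<-E'(G_{n-1}),\phi\>$. Because $\D$ is symmetric, $f_{A_n}/\|f_{A_n}\|_1=\sum_{i\in A_n}(|x_i|/\|f_{A_n}\|_1)\sign(x_i)g_i$ is a convex combination of dictionary elements and hence lies in $A_1(\D)$; testing the supremum against it gives $\<-E'(G_{n-1}),\varphi_n\>\ge t\|f_{A_n}\|_1^{-1}\<-E'(G_{n-1}),f_{A_n}\>$. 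Finally I would replace $f_{A_n}$ by $f^\e$: since $f^\e-f_{A_n}=\sum_{i\in T\cap T^{n-1}}x_ig_i\in\Phi_{n-1}$ and also $G_{n-1}\in\Phi_{n-1}$, Lemma \ref{L2.1} gives $\<E'(G_{n-1}),f^\e-f_{A_n}\>=0$ and $\<E'(G_{n-1}),G_{n-1}\>=0$, so $\<-E'(G_{n-1}),f_{A_n}\>=\<-E'(G_{n-1}),f^\e-G_{n-1}\>$. Convexity in the form (\ref{1.3}) then delivers $\<-E'(G_{n-1}),f^\e-G_{n-1}\>\ge E(G_{n-1})-E(f^\e)$, which is exactly what is needed.

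The step I expect to be the main obstacle is precisely this orthogonality reduction from $f_{A_n}$ to $f^\e$: the key is that the part of $f^\e$ already captured by the algorithm lies in $\Phi_{n-1}$, where $E'(G_{n-1})$ vanishes by the minimality Lemma \ref{L2.1}, so the greedy correlation effectively sees the full gap $E(G_{n-1})-E(f^\e)$ even though it is only compared against the not-yet-selected mass $f_{A_n}$. I would also check that the chain of inequalities remains valid regardless of the sign of $E(G_{n-1})-E(f^\e)$ and that $\rho(E,\lambda\|\varphi_n\|)\le\rho(E,\lambda)$ is legitimate; both are routine. Note that the hypothesis $\|f_0-f^\e\|\le\e$ is not actually used in this one-step inequality—only $K$-sparsity and $f^\e\in D$ enter—so it is merely carried along for the subsequent iteration argument that proves Theorem \ref{T2.1}.
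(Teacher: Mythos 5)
Your proof is correct and follows essentially the same route as the paper's: Lemma \ref{L1.1} for the one-step smoothness estimate, the greedy selection rule combined with Lemma \ref{L2.2} tested against $f_{A_n}/\|f_{A_n}\|_1\in A_1(\D)$, and Lemma \ref{L2.1} together with convexity (\ref{1.3}) to convert $\<-E'(G_{n-1}),f_{A_n}\>$ into the energy gap $E(G_{n-1})-E(f^\e)$. You merely spell out a few details (the orthogonality reduction from $f_{A_n}$ to $f^\e-G_{n-1}$ and the convex-combination argument) that the paper leaves implicit.
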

\begin{proof} It follows from the definition of WCGA(co) that $E(0)\ge E(G_1)\ge E(G_2)\dots$. Therefore, if  $E(G_{n-1})-E(f^\e)\le 0$ then the claim of Lemma \ref{L2.3} is trivial. Assume $E(G_{n-1})-E(f^\e)>0$. By Lemma \ref{L1.1} we have for any $\la$
\begin{equation}\label{2.3}
E(G_{n-1}+\la \varphi_n) \le E(G_{n-1}) - \la\<-E'(G_{n-1}),\varphi_n\> + 2 \rho(E,\la)
\end{equation}
and by (1) from the definition of the WCGA(co) and Lemma \ref{L2.2} we get
$$
\<-E'(G_{n-1}),\varphi_n\> \ge t \sup_{g\in \D} \<-E'(G_{n-1}),g\> =
$$
$$
t\sup_{\phi\in A_1(\D)} \<-E'(G_{n-1}),\phi\> \ge t \|f_{A_n}\|_1^{-1} \<-E'(G_{n-1}),f_{A_n}\>.
$$
By Lemma \ref{L2.1} and (\ref{1.3}) we obtain
$$
\<-E'(G_{n-1}),f_{A_n}\> = \<-E'(G_{n-1}),f^\e-G_{n-1}\> \ge E(G_{n-1})-E(f^\e).
$$
 Thus,  
$$
E(G_n) \le \inf_{\la\ge0} E(G_{n-1}+ \la\varphi_n)  
$$
\begin{equation}\label{2.4}
\le E(G_{n-1}) + \inf_{\la\ge0}(-\la t\|f_{A_n}\|_1^{-1}(E(G_{n-1})-E(f^\e)) + 2\rho(E,\la)),  
\end{equation}
which proves the lemma.
\end{proof}
Denote
$$
a_n:=E(G_n)-E(f^\e).
$$
From (\ref{2.4})  we obtain
 \begin{equation}\label{2.5}
 a_n \le a_{n-1} +\inf_{\la\ge 0} \left(-\la t \frac{a_{n-1}}{\|f_{A_n}\|_1} + 2\rho(E,\la)\right).
 \end{equation}
 By assumption {\bf E1} we have $\rho(E,u) \le \gamma u^2$. We get from (\ref{2.5})
 $$
 a_n \le a_{n-1}+\inf_{\la\ge 0}\left(-\frac{\la t a_{n-1}}{\|f_{A_n}\|_1} +2\gamma \la^2 \right)  .
 $$
 Let $\la_1$ be a solution of 
 $$
\frac{\la t a_{n-1}}{2 \|f_{A_n}\|_1} = 2\gamma  \la^2 ,\quad \la_1 = \frac{ t a_{n-1}}{4\gamma  \|f_{A_n}\|_1}.
 $$
 Our assumption {\bf A} (see (\ref{C3})) gives
 $$
 \|f_{A_n}\|_1=\|(f^\e-G_{n-1})_{A_n}\|_1 \le VK^r\|f^\e-G_{n-1}\| 
 $$
 \begin{equation}\label{2.6}
 \le VK^r(\|f_0-G_{n-1}\|+\|f_0-f^\e\|) \le VK^r(\|f_{n-1}\|+\e).
 \end{equation}
 
 We bound from below $a_{n-1}=E(G_{n-1})-E(f^\e)$. By our smoothness assumption and Lemma \ref{L1.1}
 $$
 E(f^\e)-E(f_0) \le 2\gamma \|f^\e-f_0\|^2 \le 2\gamma \e^2.
 $$
 Therefore,
 $$
 a_{n-1} = E(G_{n-1})-E(f^\e) = E(G_{n-1})-E(f_0) +E(f_0)-E(f^\e) 
 $$
 $$
  \ge E(G_{n-1})-E(f_0) -2\gamma \e^2.
  $$
 By restricted strong convexity assumption {\bf E2}
 $$
 E(G_{n-1})-E(f_0) \ge \beta \|G_{n-1}-f_0\|^2 = \beta \|f_{n-1}\|^2.
 $$
 Thus
  \begin{equation}\label{2.7}
 a_{n-1}\ge  \beta \|f_{n-1}\|^2-2\gamma\e^2.
 \end{equation}
 Specify
 $$
 \la = \frac{t \beta \|f_{A_n}\|_1}{32\gamma (VK^r)^2}.
 $$
 Then, using (\ref{2.6}) and (\ref{2.7}) we get
 \begin{equation}\label{2.8}
 \frac{\la}{\la_1} = \frac{\beta \|f_{A_n}\|_1^2}{8(VK^r)^2a_{n-1}}\le \frac{\beta(\|f_{n-1}\|+\e)^2}{8(\beta\|f_{n-1}\|^2-2\gamma \e^2)}.
 \end{equation}
   By our assumption $\|f_{n-1}\|^2 \ge 4(\gamma/\beta)\e^2$ and a trivial inequality $\beta \le 2\gamma$ we obtain from (\ref{2.8}) that $\la \le \la_1$ and therefore 
 $$
 a_n \le a_{n-1} \left( 1- \frac{\beta t^2}{64\gamma  (VK^r)^2 }\right),\quad n=1,\dots,m .
 $$
 Denote $c_1:= \frac{\beta t^2}{64\gamma V^2}$. Then
 \begin{equation}\label{2.9}
 a_m \le a_0\exp\left(-\frac{c_1m}{K^{2r}}\right) .
 \end{equation}
 We obtained (\ref{2.9}) under assumption $\|f_{n-1}\|^2 \ge 4(\gamma/\beta)\e^2$, $n=1,\dots,m$.
 If $\|f_{n-1}\|^2 < 4(\gamma/\beta)\e^2$ for some $n\in [1,m]$ then $a_{m-1}\le a_{n-1}\le 2\gamma \|f_{n-1}\|^2 \le 8(\gamma^2/\beta)\e^2$.
 Therefore,
 $$
 a_m \le \max\left(a_0\exp\left(-\frac{c_1m}{K^{2r}}\right), 8(\gamma^2/\beta)\e^2\right).
 $$
 Next, we have 
 $$
 E(G_m)-E(f_0) = a_m+ E(f^\e)-E(f_0) \le a_m+2\gamma \e^2.
 $$
 This completes  the proof of Theorem \ref{T2.1}. 

The above technique of studying the WCGA(co) works for the EGCA(co) as well. Instead of Lemma \ref{L2.3} we have the following one.
\begin{Lemma}\label{L2.4} Let $E$ be a uniformly smooth convex function   with modulus of smoothness $\rho(E,u)$. Take a number $\e\ge 0$ and a $K$-sparse element  $f^\e$ from $D$ such that
$$
\|f_0-f^\e\|\le \e.
$$
Then we have for the EGCA(co)
$$
E(G_n)-E(f^\e) \le E(G_{n-1})-E(f^\e) 
$$
$$
+ \inf_{\la\ge0}(-\la \|f_{A_n}\|_1^{-1}(E(G_{n-1})-E(f^\e)) + 2\rho(E,\la)),
$$
for $ n=1,2,\dots$ .
\end{Lemma}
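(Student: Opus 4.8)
The plan is to follow the same architecture as the proof of Lemma \ref{L2.3}, changing only the step in which the selected element $\varphi_n$ is exploited. As there, I would first dispose of the trivial case $E(G_{n-1})-E(f^\e)\le 0$: since $G_n$ minimizes $E$ over $\Phi_n\supseteq\Phi_{n-1}$ we have the monotonicity $E(G_n)\le E(G_{n-1})$, while the infimum on the right-hand side of the claimed inequality is attained (and equals $0$) at $\la=0$ whenever $E(G_{n-1})-E(f^\e)\le 0$, so the inequality reduces to monotonicity. Hence I would assume $E(G_{n-1})-E(f^\e)>0$ from here on; this also guarantees $\|f_{A_n}\|_1>0$ via the lower bound obtained below, so that division by it is legitimate.

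The only genuinely new point is how the defining property of the EGCA(co) replaces the weakness inequality used for the WCGA(co). First I would record that, by the selection rule in step (1) of the EGCA(co) together with $E(G_n)\le\inf_c E(G_{n-1}+c\varphi_n)$ (because $G_{n-1}+c\varphi_n\in\Phi_n$) and the symmetry of $\D$,
\begin{equation*}
E(G_n)\le\inf_{c,\,g\in\D}E(G_{n-1}+cg)=\inf_{\la\ge0,\,g\in\D}E(G_{n-1}+\la g).
\end{equation*}
Next, for each fixed pair $(\la,g)$ with $\la\ge0$ and $g\in\D$, Lemma \ref{L1.1} and $\|g\|\le1$ give
\begin{equation*}
E(G_{n-1}+\la g)\le E(G_{n-1})-\la\<-E'(G_{n-1}),g\>+2\rho(E,\la).
\end{equation*}
Taking the infimum over $(\la,g)$ and minimizing over $g$ first for each fixed $\la$ (which, since $\la\ge0$, turns $-\la\<-E'(G_{n-1}),g\>$ into $-\la\sup_{g\in\D}\<-E'(G_{n-1}),g\>$), I would arrive at
\begin{equation*}
E(G_n)\le E(G_{n-1})+\inf_{\la\ge0}\left(-\la\sup_{g\in\D}\<-E'(G_{n-1}),g\>+2\rho(E,\la)\right).
\end{equation*}
This interchange of infima is exactly what produces the constant $1$ in place of the weakness parameter $t$ of Lemma \ref{L2.3}, and I expect it to be the main (though modest) obstacle to state cleanly.

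From here the argument is identical to that of Lemma \ref{L2.3}. Using Lemma \ref{L2.2} to pass from $\D$ to $A_1(\D)$, then testing against the normalized element $f_{A_n}/\|f_{A_n}\|_1\in A_1(\D)$ (a convex combination of members of $\D$, by symmetry of $\D$), I would bound
\begin{equation*}
\sup_{g\in\D}\<-E'(G_{n-1}),g\>\ge\|f_{A_n}\|_1^{-1}\<-E'(G_{n-1}),f_{A_n}\>.
\end{equation*}
Finally, Lemma \ref{L2.1} applied to the subspace $\Phi_{n-1}$ annihilates the components of $G_{n-1}$ and of $f^\e$ supported on the already chosen indices, so that $\<-E'(G_{n-1}),f_{A_n}\>=\<-E'(G_{n-1}),f^\e-G_{n-1}\>$, and (\ref{1.3}) gives $\<-E'(G_{n-1}),f^\e-G_{n-1}\>\ge E(G_{n-1})-E(f^\e)$. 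Substituting these two bounds (and using $\la\ge0$ to preserve the inequality direction) and subtracting $E(f^\e)$ from both sides yields precisely the asserted estimate.
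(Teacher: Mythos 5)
Your proof is correct, and it diverges from the paper's in exactly one place: how the constant $1$ (rather than a weakness parameter $t$) is extracted from the EGCA(co) selection rule. The paper does not rework the smoothness step at all; instead, for each $t\in(0,1)$ it introduces an auxiliary $t$-weak element $\varphi_m^t$ with $\<-E'(G_{n-1}),\varphi_m^t\>\ge t\sup_{g\in\D}\<-E'(G_{n-1}),g\>$, observes that the EGCA(co) update is at least as good as optimizing along $\varphi_m^t$, quotes the already-proved WCGA(co) bound of Lemma \ref{L2.3} verbatim to get the estimate with the factor $t$, and then lets $t\to1$ using that $E(G_n)$ does not depend on $t$. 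You instead start from $E(G_n)\le\inf_{\la\ge0,\,g\in\D}E(G_{n-1}+\la g)$, apply Lemma \ref{L1.1} to every pair $(\la,g)$, and convert $\inf_g\bigl(-\la\<-E'(G_{n-1}),g\>\bigr)$ into $-\la\sup_{g\in\D}\<-E'(G_{n-1}),g\>$ directly; the remaining chain (Lemma \ref{L2.2}, testing against $f_{A_n}/\|f_{A_n}\|_1$, Lemma \ref{L2.1}, and (\ref{1.3})) is identical to the paper's. Your version is arguably cleaner since it avoids both the limiting argument and the need to exhibit a near-maximizer, while the paper's version buys economy by recycling Lemma \ref{L2.3} as a black box. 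Both are valid; your handling of the trivial case $E(G_{n-1})-E(f^\e)\le0$ and of the positivity of $\|f_{A_n}\|_1$ is also sound (and slightly more careful than the paper's).
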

\begin{proof} In the proof of Lemma \ref{L2.3} we did not use a specific form of the $G_{n-1}$ as the one generated by the $(n-1)$th iteration of the WCGA(co), we only used that $G_{n-1}\in D$. Let $G_{n-1}$ be from the $(n-1)$th iteration of the EGCA(co) and let $\ff_m^t$, $t\in(0,1)$, be such that 
$$
\<-E'(G_{n-1}),\ff_m^t\> \ge t\sup_{g\in\D}\<-E'(G_{n-1}),g\>.
$$
Then the above proof of Lemma \ref{L2.3} gives
\begin{equation}\label{2.10}
\inf_{\la\ge 0} E(G_{n-1}+\la \ff_m^t) \le \inf_{\la\ge0}(-\la t\|f_{A_n}\|_1^{-1}(E(G_{n-1})-E(f^\e)) + 2\rho(E,\la)).
\end{equation}
Definition of the EGCA(co) implies
 \begin{equation}\label{2.11}
E(G_m)\le \inf_{c} E(G_{n-1}+c \ff_m)\le\inf_{\la\ge 0} E(G_{n-1}+\la \ff_m^t).
\end{equation}
Combining (\ref{2.10}) and (\ref{2.11}) and taking into account that $E(G_m)$ does not depend on $t$, we complete the proof of Lemma \ref{L2.4}. 
 
\end{proof}

The following theorem is derived from Lemma \ref{L2.4} in the same way as Theorem \ref{T2.1} was derived from Lemma \ref{L2.3}. 

\begin{Theorem}\label{T2.2} Let $E$ satisfy assumptions {\bf E1} and {\bf E2}. Suppose for a point of minimum $f_0$ we have $\|f_0-f^\e\|\le \e$ with $K$-sparse $f:=f^\e$ satisfying property {\bf A}. Then for the EGCA(co)  we have for $K+m \le S$
$$
E(G_m)-E(f_0) \le \max\left((E(0)-E(f_0))\exp\left(-\frac{c_1m}{K^{2r}}\right), 8(\gamma^2/\beta)\e^2\right)+2\gamma \e^2, 
$$
where $c_1:= \frac{\beta }{64\gamma V^2}$.
\end{Theorem}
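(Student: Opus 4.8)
The plan is to repeat, essentially verbatim, the argument that derives Theorem \ref{T2.1} from Lemma \ref{L2.3}, but now starting from Lemma \ref{L2.4}. The only difference between the two lemmas is that the recursion furnished by Lemma \ref{L2.4} carries no weakness factor $t$; it is precisely the $t=1$ instance of the recursion used for Theorem \ref{T2.1}. I therefore expect every estimate to go through with $t$ replaced by $1$, so that the contraction factor $\frac{\beta t^2}{64\gamma(VK^r)^2}$ becomes $\frac{\beta}{64\gamma(VK^r)^2}$ and the constant becomes $c_1=\frac{\beta}{64\gamma V^2}$ as stated.

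Concretely, I would set $a_n:=E(G_n)-E(f^\e)$ and, assuming first that $\|f_{n-1}\|^2\ge 4(\gamma/\beta)\e^2$ for all $n=1,\dots,m$, invoke Lemma \ref{L2.4} to get
$$
a_n \le a_{n-1}+\inf_{\la\ge0}\left(-\la\frac{a_{n-1}}{\|f_{A_n}\|_1}+2\rho(E,\la)\right).
$$
Using assumption {\bf E1}, namely $\rho(E,\la)\le\gamma\la^2$, the balancing value is $\la_1=\frac{a_{n-1}}{4\gamma\|f_{A_n}\|_1}$. I would then bound $\|f_{A_n}\|_1\le VK^r(\|f_{n-1}\|+\e)$ via property {\bf A} exactly as in (\ref{2.6}), and bound $a_{n-1}\ge\beta\|f_{n-1}\|^2-2\gamma\e^2$ from below exactly as in (\ref{2.7}), using {\bf E2} together with $E(f^\e)-E(f_0)\le 2\gamma\e^2$.

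The decisive observation is that when I make the choice $\la=\frac{\beta\|f_{A_n}\|_1}{32\gamma(VK^r)^2}$ (the $t=1$ analogue of the choice in the proof of Theorem \ref{T2.1}), the ratio
$$
\frac{\la}{\la_1}=\frac{\beta\|f_{A_n}\|_1^2}{8(VK^r)^2 a_{n-1}}
$$
is \emph{independent of $t$}: in the proof of Theorem \ref{T2.1} the factor $t$ appears in both $\la$ and $\la_1$ and cancels. Hence the verification that $\la\le\la_1$—which rests only on the standing assumption $\|f_{n-1}\|^2\ge4(\gamma/\beta)\e^2$ and the trivial inequality $\beta\le2\gamma$—is word-for-word identical to (\ref{2.8}). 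Substituting $\la\le\la_1$ into the infimum yields $a_n\le a_{n-1}\bigl(1-\frac{\beta}{64\gamma(VK^r)^2}\bigr)$, and iterating gives $a_m\le a_0\exp(-c_1 m/K^{2r})$ with $c_1=\frac{\beta}{64\gamma V^2}$.

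Finally I would dispose of the complementary case just as before: if $\|f_{n-1}\|^2<4(\gamma/\beta)\e^2$ for some $n\in[1,m]$, then monotonicity of $E(G_n)$ together with the smoothness bound $a_{n-1}\le 2\gamma\|f_{n-1}\|^2$ (which uses $E'(f_0)=0$ and Lemma \ref{L1.1}) gives $a_m\le 8(\gamma^2/\beta)\e^2$; combining the two cases and adding $E(f^\e)-E(f_0)\le2\gamma\e^2$ produces the claimed bound on $E(G_m)-E(f_0)$. I do not anticipate a genuine obstacle: the entire content of passing from the weak Chebyshev to the $E$-greedy algorithm is already packaged in Lemma \ref{L2.4}, and the one point worth checking explicitly is exactly the $t$-independence of $\la/\la_1$, which is what makes the geometric-decay step survive the removal of the weakness parameter.
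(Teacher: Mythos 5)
Your proposal is correct and follows exactly the route the paper intends: the paper's own ``proof'' of Theorem \ref{T2.2} is the single remark that it is derived from Lemma \ref{L2.4} just as Theorem \ref{T2.1} was derived from Lemma \ref{L2.3}, and your write-up simply carries that out with $t=1$. Your explicit check that the ratio $\la/\la_1$ is independent of $t$, so that the choice $\la\le\la_1$ and the resulting contraction factor survive with $t$ replaced by $1$, is precisely the point that makes the transfer work.
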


\newpage


\begin{thebibliography}{9999}

\bibitem{BL} J.M. Borwein and A.S. Lewis, Convex Analysis and Nonlinear Optimization. Theory and Examples, Canadian Mathematical Society, Springer, 2006.

\bibitem{LivTem} E. Livshitz and V. Temlyakov, Sparse approximation and recovery by greedy algorithms, Preprint, 2013.
   
\bibitem{N} Yu. Nesterov, Introductory Lectures on Convex Optimization: A Basic Course, Kluwer Academic Publishers, Boston, 2004.   
 
 \bibitem{SSZ} S. Shalev-Shwartz, N. Srebro, and T. Zhang, Trading accuracy for sparsity in optimization problems with sparsity constrains, SIAM Journal on Optimization, {\bf 20(6)} (2010), 2807--2832.

\bibitem{Tbook} V.N. Temlyakov, Greedy approximation, Cambridge University Press, 2011.

\bibitem{Tco1} V.N. Temlyakov, Greedy approximation in convex optimization, arXiv: 1206.0392v1 [stat.ML] 2 Jun 2012
(see also IMI Preprint, 2012:03, 1--25). 

\bibitem{Tco2} V.N. Temlyakov, Greedy expansions in convex optimization, arXiv: 1206.0393v1 [stat.ML] 2 Jun 2012
(see also IMI Preprint, 2012:03, 1--27). 

\bibitem{T144} V.N. Temlyakov, Sparse approximation and recovery by greedy algorithms in Banach spaces, arXiv: 1303.6811v1 [stat.ML] 27 Mar 2013. 

\bibitem{Z2} T. Zhang, Sparse Recovery with Orthogonal Matching Pursuit under RIP, IEEE Transactions on Information Theory, {\bf 57} (2011), 6215--6221.


\end{thebibliography}
\end{document}